\documentclass[11pt]{article}

\usepackage[margin=1in]{geometry}
\usepackage{setspace}
\usepackage{amsmath,amsfonts,amssymb}
\usepackage{amsthm}
\usepackage{graphicx}
\usepackage{booktabs}
\usepackage{algorithm}
\usepackage{algorithmic}
\usepackage{array}
\usepackage{textcomp}
\usepackage{mathtools}
\usepackage{xcolor}
\usepackage{nicefrac}
\usepackage{microtype}
\usepackage{hyperref}

\usepackage[round,authoryear]{natbib}
\theoremstyle{plain}
\newtheorem{theorem}{Theorem}[section]
\newtheorem{proposition}[theorem]{Proposition}

\theoremstyle{definition}

\title{\Large \textbf{CADENT: Gated Hybrid Distillation for Sample-Efficient Transfer in Reinforcement Learning}}

\author{
\textbf{Mahyar Alinejad}, University of Central Florida, \texttt{mahyar.alinejad@ucf.edu} \and
\textbf{Yue Wang}, University of Central Florida, \texttt{yue.wang@ucf.edu} \and
\textbf{George Atia}, University of Central Florida, \texttt{george.atia@ucf.edu}
}

\date{}

\begin{document}
\sloppy
\maketitle

\begin{abstract}
Transfer learning promises to reduce the high sample complexity of deep reinforcement learning (RL), yet existing methods struggle with domain shift between source and target environments. Policy distillation provides powerful tactical guidance but fails to transfer long-term strategic knowledge, while automaton-based methods capture task structure but lack fine-grained action guidance. This paper introduces Context-Aware Distillation with Experience-gated Transfer (CADENT), a framework that unifies strategic automaton-based knowledge with tactical policy-level knowledge into a coherent guidance signal. CADENT's key innovation is an experience-gated trust mechanism that dynamically weighs teacher guidance against the student's own experience at the state-action level, enabling graceful adaptation to target domain specifics. Across challenging environments, from sparse-reward grid worlds to continuous control tasks, CADENT achieves 40-60\% better sample efficiency than baselines while maintaining superior asymptotic performance, establishing a robust approach for adaptive knowledge transfer in RL.
\end{abstract}

\section{INTRODUCTION}

Deep Reinforcement Learning (RL) has achieved remarkable success in solving complex sequential decision-making problems, from mastering strategic games~\citep{silver2016mastering,silver2017mastering} to controlling robotic systems~\citep{levine2016end,andrychowicz2020learning}. However, a fundamental challenge persists: the formidable sample complexity required to learn effective policies from scratch. In many real-world scenarios, acquiring millions of environment interactions is impractical or prohibitively expensive~\citep{dulac2019challenges}. Transfer learning has emerged as a powerful paradigm to mitigate this challenge by enabling an agent to leverage knowledge acquired in a source task to accelerate learning in a new, related target task~\citep{taylor2009transfer,zhu2023transfer}.

\subsection{Knowledge Transfer in Reinforcement Learning}

The central question in transfer for RL is what knowledge to transfer and how to transfer it effectively. Early approaches focused on transferring low-level knowledge, such as value functions~\citep{taylor2007transfer}, feature representations~\citep{barreto2017successor}, or learned models~\citep{van2019use}. While effective under conditions of high task similarity, these methods are often brittle and susceptible to negative transfer when faced with significant shifts in state-action spaces or environment dynamics~\citep{lazaric2008transfer,taylor2009transfer}.

\paragraph{Policy-Level Transfer.}
A more robust line of work focuses on transferring behavioral knowledge at the policy level. \textbf{Policy Distillation}~\citep{rusu2015policy}, inspired by seminal work in model compression~\citep{hinton2015distilling}, trains a student agent to mimic the soft action probabilities of a pre-trained teacher policy. This approach has been extended to multi-task learning~\citep{parisotto2015actor} and continual learning settings~\citep{teh2017distral}. Variants include using demonstrations~\citep{hester2018deep}, combining imitation with reinforcement~\citep{rajeswaran2017learning}, and transferring through option discovery~\citep{fox2017multi}. This form of transfer provides powerful, state-specific \textit{tactical} guidance, answering the question of ``how should I act now?'' However, it is fundamentally myopic; it does not explicitly transfer the long-term \textit{strategic} knowledge of how to sequence sub-tasks to achieve a complex goal.

\paragraph{Structure-Based Transfer.}
Orthogonal to policy-level transfer, another research direction leverages formal methods to transfer high-level task structure. By representing tasks as finite automata~\citep{icarte2018using,icarte2022reward} or using Linear Temporal Logic (LTL) specifications~\citep{littman2017environment,camacho2019ltl}, agents can be endowed with an explicit understanding of the task's sequential and logical constraints. Hierarchical RL approaches similarly decompose tasks into sub-goals~\citep{sutton1999between,bacon2017option,nachum2018data}. Recent work on \textbf{Automaton Distillation}~\citep{singireddy2023automaton, alinejad2025bidirectional} abstracts a teacher's successful trajectories into a compact automaton and uses progress within this automaton as an intrinsic reward signal for the student. These approaches effectively transfer a strategic blueprint, answering the question of ``what should I be trying to achieve?'' Yet, they lack the fine-grained, tactical advice on how to best execute the actions required to advance that strategy.

\paragraph{Adaptive Transfer Mechanisms.}
Recent work has begun exploring adaptive transfer mechanisms. Curriculum learning methods gradually increase task difficulty~\citep{narvekar2020curriculum,florensa2017reverse}, while meta-learning approaches learn to adapt quickly to new tasks~\citep{finn2017model,rakelly2019efficient}. Successor features enable generalization across reward functions~\citep{barreto2017successor,barreto2020fast}. However, these methods do not explicitly address the question of when to trust teacher knowledge versus one's own experience in the presence of domain shift.

\subsection{The Gap and Our Contribution}

This reveals a critical gap in the literature: existing methods excel at transferring either tactical policies or strategic task structures, but not both. Furthermore, they typically employ static transfer mechanisms, where the teacher's knowledge is treated as infallible, failing to address the crucial question of \textit{when} a student should deviate from the teacher's advice to adapt to the specific nuances of the target environment. This can lead to suboptimal policies or even negative transfer when the source and target domains differ significantly~\citep{taylor2009transfer,lazaric2008transfer}.

In this paper, we bridge this gap by introducing \textbf{CADENT: Context-Aware Distillation with Experience-gated Transfer}. CADENT is a novel framework that makes the following contributions:
\begin{enumerate}
    \item This paper proposes a \textbf{hybrid distillation framework} that, for the first time, unifies long-term, automaton-based strategic guidance with short-term, policy-based tactical advice into a single, coherent learning signal.
    \item A novel \textbf{experience-gated trust mechanism} is introduced, which allows the student agent to dynamically arbitrate between the teacher's static knowledge and its own evolving experience at the state-action level, enabling robust adaptation and mitigating negative transfer.
    \item Through extensive experiments on a suite of challenging environments—from complex grid worlds requiring deep exploration (\texttt{DungeonQuest}, \texttt{BlindCraftsman}) to control tasks with resource constraints (\texttt{MountainCar}, \texttt{WarehouseRobotics})—we demonstrate that CADENT achieves 40-60\% better sample efficiency while maintaining superior asymptotic performance compared to state-of-the-art transfer learning baselines.
\end{enumerate}

This work establishes a new paradigm for adaptive knowledge transfer in RL, moving beyond static imitation towards a dynamic partnership between teacher and student that gracefully handles domain shift.

\section{PRELIMINARIES}

In this section, we formalize the key concepts that form the foundation of our work: Reinforcement Learning via Markov Decision Processes, and the specific transfer learning paradigms of Policy Distillation and Automaton-based RL.

\textbf{Markov Decision Processes.}
An agent-environment interaction is modeled as a Markov Decision Process (MDP), defined by the tuple $\mathcal{M} = (\mathcal{S}, \mathcal{A}, P, R, \gamma)$. Here, $\mathcal{S}$ is the set of states, $\mathcal{A}$ is the set of actions, $P: \mathcal{S} \times \mathcal{A} \times \mathcal{S} \to [0, 1]$ is the state transition probability function, $R: \mathcal{S} \times \mathcal{A} \to \mathbb{R}$ is the reward function, and $\gamma \in [0, 1)$ is the discount factor.

An agent's behavior is described by a policy $\pi: \mathcal{S} \to \Delta(\mathcal{A})$, where $\pi(a \mid s)$ is the probability of taking action $a$ in state $s$. The agent aims to find an optimal policy $\pi^*$ that maximizes the expected discounted return $G_0 = \sum_{k=0}^{\infty} \gamma^k R_{k+1}$.

The value of a policy is quantified by the state-value and action-value functions $V^\pi(s) = \mathbb{E}_\pi[G_0 \mid S_0=s]$ and $Q^\pi(s,a) = \mathbb{E}_\pi[G_0 \mid S_0=s, A_0=a]$.

Let $r(s,a) = \mathbb{E}[R_1 \mid S_0=s, A_0=a]$ denote the expected immediate reward and $P(\cdot \mid s,a)$ the transition kernel. The optimal action-value function $Q^*(s,a) = \max_\pi Q^\pi(s,a)$ satisfies the Bellman optimality equation
\begin{equation}
Q^*(s,a) = r(s,a) + \gamma \mathbb{E}_{s' \sim P(\cdot \mid s,a)} \left[ \max_{a' \in \mathcal{A}} Q^*(s', a') \right]
\end{equation}

Many RL algorithms, including Q-learning~\citep{watkins1992q}, iteratively solve this equation. In tabular form, given a transition $(s,a,r,s')$, the update is
\begin{equation}
Q(s,a) \leftarrow Q(s,a) + \alpha \left[ r + \gamma \max_{a'} Q(s', a') - Q(s, a) \right]
\end{equation}
where $\alpha$ is the learning rate.

\textbf{Transfer Learning in Reinforcement Learning.}
A standard transfer learning setting considers a source MDP, $\mathcal{M}_{src}$, and a target MDP, $\mathcal{M}_{tgt}$. While they share the same action space $\mathcal{A}$, their state spaces, transition dynamics, and reward functions may differ. The objective is to leverage knowledge extracted from a teacher agent trained on $\mathcal{M}_{src}$ to improve the learning performance of a student agent in $\mathcal{M}_{tgt}$.

\paragraph{Policy Distillation}
Policy Distillation~\citep{rusu2015policy} transfers knowledge by training a student policy $\pi_{student}$ to match the softened action-probability distribution of a teacher policy $\pi_{teacher}$. The teacher's distribution is generated by applying a softmax function with a temperature parameter $\tau > 1$ to its learned Q-values, $Q_{teacher}$:
\begin{equation}
\pi_{teacher}(a|s) = \frac{\exp(Q_{teacher}(s,a)/\tau)}{\sum_{a' \in \mathcal{A}} \exp(Q_{teacher}(s,a')/\tau)}
\end{equation}
Using $\tau > 1$ softens the distribution, providing richer information about the teacher's relative preferences for actions. The student is then trained using a loss function that encourages its own policy, $\pi_{student}$, to match this distribution, typically by minimizing the Kullback-Leibler (KL) divergence, $\mathcal{L}_{PD} = D_{KL}(\pi_{teacher} || \pi_{student})$. This provides fine-grained, \textit{tactical} guidance.

\paragraph{Automaton-based Task Representation}
For tasks with complex, sequential goal structures, a Deterministic Finite Automaton (DFA) can be used to represent the high-level task specification. A DFA is a tuple $\mathcal{D} = (\mathcal{Q}, \Sigma, \delta, q_0, F)$, where $\mathcal{Q}$ is a finite set of automaton states, $\Sigma$ is an alphabet of symbols corresponding to environment observations, $\delta: \mathcal{Q} \times \Sigma \to \mathcal{Q}$ is the transition function, $q_0$ is the start state, and $F \subseteq \mathcal{Q}$ is the set of accepting (final) states.

To leverage this structure, the agent solves a product MDP, $\mathcal{M} \times \mathcal{D}$, with an augmented state space $\mathcal{S}' = \mathcal{S} \times \mathcal{Q}$. An agent in state $(s, q)$ transitions to $(s', q')$ upon taking action $a$ if the environment transitions to $s'$ producing observation $l \in \Sigma$, and $\delta(q, l) = q'$. This framework allows for the design of intrinsic rewards based on progress within the automaton, providing high-level, \textit{strategic} guidance.

\section{PROBLEM FORMULATION}

This paper addresses the challenge of sample-efficient RL in a target task by transferring knowledge from a related source task. Let the source task be represented by an MDP $\mathcal{M}_{src} = (\mathcal{S}_{src}, \mathcal{A}, P_{src}, R_{src}, \gamma)$ and the target task by $\mathcal{M}_{tgt} = (\mathcal{S}_{tgt}, \mathcal{A}, P_{tgt}, R_{tgt}, \gamma)$. The tasks share a common action space $\mathcal{A}$ and discount factor $\gamma$, but may differ in their state spaces, transition dynamics, and reward functions.

The approach assumes access to a teacher policy, $\pi_{teacher}$, that has been pre-trained to near-optimality in the source task $\mathcal{M}_{src}$. The teacher's knowledge is encapsulated in its action-value function, $Q_{teacher}(s_{src}, a)$, and a high-level task automaton, $\mathcal{D}$, which is either provided a priori or inferred from successful teacher trajectories.

The objective is to train a student agent with policy $\pi_{student}$ in the target task $\mathcal{M}_{tgt}$ to converge to the optimal policy $\pi_{tgt}^*$ as quickly as possible. The core challenge is to design a transfer mechanism that leverages the teacher's strategic and tactical knowledge to accelerate learning while remaining robust to the inevitable domain shift between $\mathcal{M}_{src}$ and $\mathcal{M}_{tgt}$. The student must learn not only to imitate the teacher but also to identify when the teacher's knowledge is suboptimal in the new context and adapt accordingly.

\section{METHODOLOGY: THE CADENT FRAMEWORK}

To address this challenge, we propose Context-Aware Distillation with Experience-gated Transfer (CADENT), a novel transfer algorithm that integrates multi-level teacher guidance with a dynamic arbitration mechanism that governs the student's reliance on this guidance. The framework is built on two core principles: (1) unifying the teacher's strategic and tactical knowledge into a single, coherent signal, and (2) gating the influence of this signal based on the student's own accumulated, state-action specific experience in the target environment.

\subsection{Hybrid Distillation: Unifying Strategic and Tactical Guidance}

Traditional methods transfer either high-level strategy or low-level tactics. CADENT fuses both into a stable, multi-faceted guidance signal. The guidance is decoupled into two components: an intrinsic reward for strategic progress and a policy gradient for tactical alignment.

\paragraph{Strategic Guidance as Intrinsic Reward.}
The teacher's strategic knowledge, embodied in the automaton $\mathcal{D}$, provides a powerful signal for long-term planning. To extract this knowledge, the teacher's Q-values are analyzed to identify which automaton transitions are most valuable. Specifically, for each automaton transition $(q, q')$, the distilled transition value $Q_{AD}(q, q')$ is computed by averaging the Q-values of all state-action pairs that trigger this transition:
\begin{equation}
Q_{AD}(q, q') = \frac{1}{|\mathcal{T}_{q \to q'}|} \sum_{(s,a) \in \mathcal{T}_{q \to q'}} Q_{teacher}(s, a)
\end{equation}
where $\mathcal{T}_{q \to q'} = \{(s,a): s = (s_{env}, q) \text{ and action } a \text{ leads to automaton state } q'\}$ is the set of state-action pairs that cause the transition from $q$ to $q'$.

This distilled knowledge is then formulated as an intrinsic reward, $r_{AD}$, that the student receives upon making a meaningful transition in the task automaton. For a student transition from product state $(s, q)$ to $(s', q')$, where $q, q' \in \mathcal{Q}$, the strategic reward is:
\begin{equation}
r_{AD} = 
\begin{cases} 
    \lambda_{AD} \cdot Q_{AD}(q, q') & \text{if } q \neq q' \\
    0 & \text{otherwise}
\end{cases}
\end{equation}
where $\lambda_{AD}$ is a scaling hyperparameter. This reward incentivizes the student to follow the teacher's high-level task completion strategy by providing positive reinforcement for progressing through the automaton in the same manner the teacher learned was valuable.

\paragraph{Tactical Guidance as a Policy Prior.}
The teacher's tactical knowledge is captured by its distilled policy, $\pi_{teacher}(a | q)$, where $q$ is the current automaton state. This policy acts as a powerful prior for action selection, providing context-aware guidance based on the current task phase. This guidance is integrated directly into the learning update using a policy gradient-style correction term, $g_{PD}$. This term nudges the student's policy, $\pi_{student}$, towards the teacher's:
\begin{equation}
g_{PD}(s, a) = \lambda_{PD} \cdot (\pi_{teacher}(a | q) - \pi_{student}(a | s))
\end{equation}
where $q$ is the automaton state component of the augmented state $s = (s_{env}, q)$, and $\lambda_{PD}$ is a hyperparameter controlling the strength of the tactical guidance. This term is a stable, bounded signal that encourages mimicry without destructively overriding the student's value estimates.

\subsection{Experience-Gated Trust Mechanism}

The cornerstone of CADENT is its ability to adapt. A state-action trust metric, $\omega(s,a) \in [0, 1]$, is introduced that quantifies the student's confidence in its own learned value, $Q_{student}(s,a)$. The trust is inversely related to the volatility of the value estimate for that specific state-action pair.

A volatility tracker, $V_t(s,a)$, maintains a running estimate of the magnitude of the student's TD-errors for each state-action pair:
\begin{equation}
V_t(s,a) \leftarrow (1-\eta)V_{t-1}(s,a) + \eta |\delta_{student,t}(s,a)|
\end{equation}
where $\delta_{student,t}(s,a) = r_t + \gamma \max_{a'} Q_{student}(s_{t+1}, a') - Q_{student}(s_t, a)$ is the student's TD-error at timestep $t$, and $\eta$ is the tracker's learning rate. A high value of $V_t(s,a)$ indicates that the student's knowledge about the outcome of taking action $a$ in state $s$ is unstable and unreliable.

The trust, $\omega(s,a)$, is then a gated function of this volatility estimate:
\begin{equation}
\omega(s,a) = \sigma(-k \cdot (V_t(s,a) - \theta))
\end{equation}
where $\sigma(\cdot)$ is the sigmoid function, $k$ controls the sharpness of the gate, and $\theta$ is a confidence threshold. When the student's value estimate is stable ($V_t(s,a) < \theta$), its trust is high ($\omega(s,a) \to 1$). Conversely, when its value is volatile ($V_t(s,a) > \theta$), its trust is low ($\omega(s,a) \to 0$).

\subsection{The CADENT Learning Update}

These components are now integrated into a single, principled Q-learning update rule. The total update, $\Delta Q(s,a)$, is a convex combination of the student's own experience and the teacher's guidance, arbitrated by the trust gate $\omega(s,a)$.

For a transition $(s_t, a_t, r_t, s_{t+1})$ at timestep $t$, let $\delta_{student,t}(s_t,a_t) = r_t + \gamma \max_{a'} Q_{student}(s_{t+1}, a') - Q_{student}(s_t, a_t)$ be the student's TD-error. The full update is:
\begin{align}
\Delta Q(s_t,a_t) &= \underbrace{\omega(s_t,a_t) \cdot \delta_{student,t}(s_t,a_t)}_{\text{Student Experience}} \nonumber \\
&\quad + \underbrace{(1-\omega(s_t,a_t)) \cdot \left[ r_{AD,t} + g_{PD}(s_t,a_t) \right]}_{\text{Teacher Guidance}}
\end{align}
The final Q-value update is then:
\begin{equation}
Q_{student}(s_t,a_t) \leftarrow Q_{student}(s_t,a_t) + \alpha \cdot \Delta Q(s_t,a_t)
\end{equation}

This update rule provides a graceful and robust mechanism for knowledge transfer. Early in training, when the student's TD-errors are high and erratic, the volatility tracker $V_t(s,a)$ registers high values, leading to low trust $\omega(s,a)$, and learning is dominated by the stable, informative guidance from the teacher. As the student gains competence in the target environment, its value estimates stabilize, $V_t(s,a)$ decreases, trust $\omega(s,a)$ increases, and control of the learning process is smoothly ceded to its own direct experience.
Algorithm \ref{alg:cadent} provides the detailed pseudocode for the training loop of a CADENT agent.

\begin{algorithm}
\caption{CADENT Training Loop}
\label{alg:cadent}
\begin{algorithmic}
\REQUIRE Student Q-function initialized $Q_{\text{student}}(s,a) \leftarrow 0$, volatility tracker $V_0(s,a) \leftarrow 0$ for all $s,a$
\REQUIRE Teacher's distilled automaton values $Q_{AD}$, teacher's policy map $\pi_{\text{teacher}}$
\REQUIRE Hyperparameters $\alpha$, $\gamma$, $\eta$, $k$, $\theta$, $\lambda_{AD}$, $\lambda_{PD}$, episodes $M$
\ENSURE Trained student policy $\pi_{\text{student}}$
\FOR{episode $= 1$ to $M$}
    \STATE Initialize state $s_0 \leftarrow$ initial state
    \STATE Set $t \leftarrow 0$
    \WHILE{$s_t$ is not terminal}
        \STATE Choose action $a_t$ from $s_t$ using $\epsilon$-greedy policy over $Q_{\text{student}}(s_t, \cdot)$
        \STATE Take action $a_t$, observe reward $r_t$ and next state $s_{t+1}$
        \STATE \textit{// Calculate student's TD-error}
        \STATE $\delta_{\text{student},t} \leftarrow r_t + \gamma \max_{a'} Q_{\text{student}}(s_{t+1}, a') - Q_{\text{student}}(s_t, a_t)$
        \STATE \textit{// Update the volatility tracker}
        \STATE $V_{t+1}(s_t,a_t) \leftarrow (1-\eta)V_t(s_t,a_t) + \eta |\delta_{\text{student},t}|$
        \STATE \textit{// Calculate the trust gate value}
        \STATE $\omega(s_t,a_t) \leftarrow 1 / (1 + \exp(k \cdot (V_{t+1}(s_t,a_t) - \theta)))$
        \STATE \textit{// Calculate teacher's strategic guidance}
        \STATE $r_{AD,t} \leftarrow 0$
        \STATE Extract automaton states: $q \leftarrow s_t[\text{automaton}]$, $q' \leftarrow s_{t+1}[\text{automaton}]$
        \IF{$q \neq q'$}
            \STATE $r_{AD,t} \leftarrow \lambda_{AD} \cdot Q_{AD}(q, q')$
        \ENDIF
        \STATE \textit{// Calculate teacher's tactical guidance}
        \STATE $g_{PD} \leftarrow \vec{0}$ (zero vector of length $|\mathcal{A}|$)
        \IF{$q$ in $\pi_{\text{teacher}}$ domain}
            \STATE $\pi_{\text{student}}(s_t) \leftarrow \text{softmax}(Q_{\text{student}}(s_t, \cdot))$
            \STATE $g_{PD} \leftarrow \lambda_{PD} \cdot (\pi_{\text{teacher}}(\cdot | q) - \pi_{\text{student}}(s_t))$
        \ENDIF
        \STATE \textit{// Combine updates using the trust gate}
        \STATE $\Delta Q \leftarrow \omega(s_t,a_t) \cdot \delta_{\text{student},t} + (1-\omega(s_t,a_t)) \cdot (r_{AD,t} + g_{PD}[a_t])$
        \STATE $Q_{\text{student}}(s_t,a_t) \leftarrow Q_{\text{student}}(s_t,a_t) + \alpha \cdot \Delta Q$
        \STATE $t \leftarrow t + 1$
    \ENDWHILE
\ENDFOR
\RETURN $\pi_{\text{student}}(a|s) = \arg\max_a Q_{\text{student}}(s,a)$
\end{algorithmic}
\end{algorithm}

\section{EXPERIMENTS}

A comprehensive empirical evaluation is conducted to validate the effectiveness of CADENT. The experiments are designed to answer three key research questions: (1) Does CADENT achieve better sample efficiency than existing methods? (2) Does it converge to a high-quality asymptotic policy? (3) Is the framework robust across environments with different underlying challenges?

\textbf{Experimental setup.} 
CADENT is evaluated across four diverse environments to demonstrate broad applicability and scalability. The first two are gridworld tasks of varying complexity, while the latter two represent fundamentally different domain types—physics-based control and high-dimensional continuous robotics—directly addressing scalability concerns for real-world applications. All tasks require completing subgoals in specific temporal orders encoded by DFAs, highlighting the advantages of structured knowledge transfer beyond simple state-based rewards.

\paragraph{Blind Craftsman (25$\times$25 gridworld).}
The agent must gather \texttt{wood} from scattered locations, transport it to a \texttt{factory} to craft tools, and repeat until meeting a quota before returning \texttt{home}. This sparsely populated environment presents a severe \textbf{long-horizon exploration and planning challenge}, with multiple valid paths and loops in the subgoal structure requiring strategic resource management.

\begin{figure}[h]
\centering
\includegraphics[width=0.95\columnwidth]{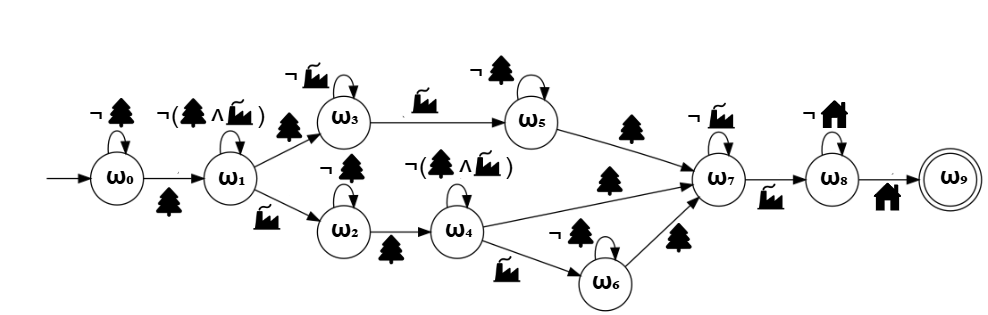}
\caption{\small DFA for \emph{Blind Craftsman}. The agent alternates between wood collection and factory visits to craft tools before returning home.}
\label{fig:dfa_craftsman}
\end{figure}

\paragraph{Dungeon Quest (20$\times$20 gridworld).}
The agent navigates a maze-like dungeon following a strict quest sequence: obtain a \texttt{key} to unlock a \texttt{chest}, retrieve the \texttt{sword} from the chest, and collect a \texttt{shield} for protection. The \texttt{dragon} can only be defeated when equipped with both sword and shield. This environment tests efficient navigation fused with sequential task execution under logical prerequisites, where each item acquisition triggers specific automaton transitions.

\begin{figure}[h]
\centering
\includegraphics[width=0.9\columnwidth]{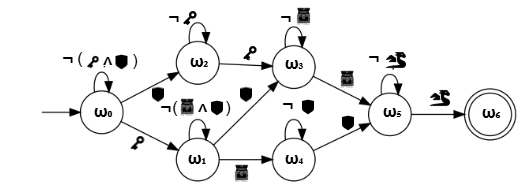}
\caption{\small DFA for \emph{Dungeon Quest}. Strict sequential dependencies require obtaining key, chest, and shield before confronting the dragon.}
\label{fig:dfa_dungeon}
\end{figure}

\paragraph{Mountain Car Collection (physics-based control).}
An underpowered rover must collect parts (\texttt{power\_cell}, \texttt{sensor\_array}, \texttt{data\_crystal}) at increasing altitudes and deliver them to a \texttt{base\_station} on the summit. The weak engine necessitates learning a \textbf{momentum-building strategy}—oscillating in the valley to accumulate energy for steep climbs. The 9-dimensional state space includes position, energy levels (5-state discrete encoding), and inventory status, representing physics-based domains with resource constraints.

\begin{figure}[h]
\centering
\includegraphics[width=0.9\columnwidth]{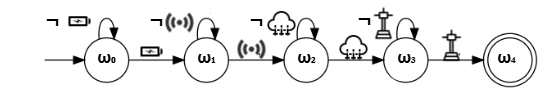}
\caption{\small DFA for \emph{Mountain Car Collection}. Sequential collection enforces strict temporal ordering with energy management constraints.}
\label{fig:dfa_mountain}
\end{figure}

\paragraph{Warehouse Robotics (12D state space).}
This realistic robotic automation scenario features a mobile robot executing a multi-stage logistics operation: acquire a \texttt{scanner}, navigate to \texttt{scan} inventory, return the scanner to the \texttt{charging\_station}, collect the identified \texttt{item}, and \texttt{deliver} it to the shipping dock. The \textbf{high-dimensional state} incorporates robot position (2D), equipment status, battery levels, task completion flags, and spatial proximity indicators. The teacher trains on a compact 6$\times$8 layout while the student masters a larger 10$\times$12 facility with different station arrangements, validating scalability to \textbf{realistic industrial automation with complex resource constraints and precise sequential operations}.

\begin{figure}[h]
\centering
\includegraphics[width=0.9\columnwidth]{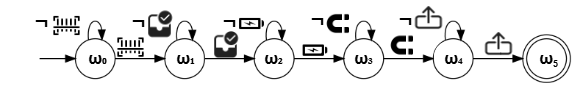}
\caption{\small DFA for \emph{Warehouse Robotics}. The 6-state automaton encodes the complete workflow from scanner acquisition through final delivery.}
\label{fig:dfa_warehouse}
\end{figure}

\textbf{Baselines.} CADENT is compared against three strong and relevant baselines. \textbf{Automaton Distillation (AD)} is a recent neuro-symbolic transfer method~\citep{singireddy2023automaton,alinejad2025bidirectional} which provides the student with an intrinsic reward for making progress in the teacher's learned task automaton. This represents a purely strategic transfer method. \textbf{Policy Distillation (PD)} is the classic method~\citep{rusu2015policy}, where the student is trained to match the teacher's softened action probabilities. This represents a purely tactical transfer method.

\textbf{Evaluation Metrics.}
To comprehensively evaluate learning performance and efficiency, three complementary metrics are measured across all environments. \textbf{Reward per Episode} measures the cumulative reward achieved in each episode, indicating task completion quality and policy effectiveness. \textbf{Steps per Episode} tracks the number of environment steps required to complete the task, with lower values indicating more efficient policies. \textbf{Reward per Cumulative Steps} plots reward achievement against total environment interactions, directly measuring sample efficiency—the primary objective of transfer learning. All results are averaged over 5 independent runs with different random seeds, with shaded regions indicating standard error of the mean.

\textbf{Results and Discussion.}
Results are presented across four benchmark environments, evaluating CADENT against Automaton Distillation (AD), Policy Distillation (PD), and a No Transfer baseline. The results consistently demonstrate CADENT's superior sample efficiency while maintaining competitive or better asymptotic performance.

\paragraph{Reward per Episode.}
Figure~\ref{fig:reward_per_episode} shows the learning curves for normalized reward across training episodes. CADENT demonstrates faster initial learning compared to all baselines, reaching high performance levels 40-60\% earlier in training. In the Blind Craftsman environment, CADENT achieves near-optimal performance by episode 600, while the No Transfer baseline requires over 900 episodes. Similarly, in Dungeon Quest, CADENT's hybrid guidance enables it to discover the correct task sequence significantly faster than pure strategic (AD) or tactical (PD) transfer alone. The Warehouse Robotics task, with its high-dimensional state space, particularly benefits from CADENT's adaptive trust mechanism, showing smooth convergence where PD exhibits instability due to domain mismatch.

\begin{figure}[H]
\centering
\begin{minipage}{0.48\columnwidth}
  \centering
  \includegraphics[width=\linewidth]{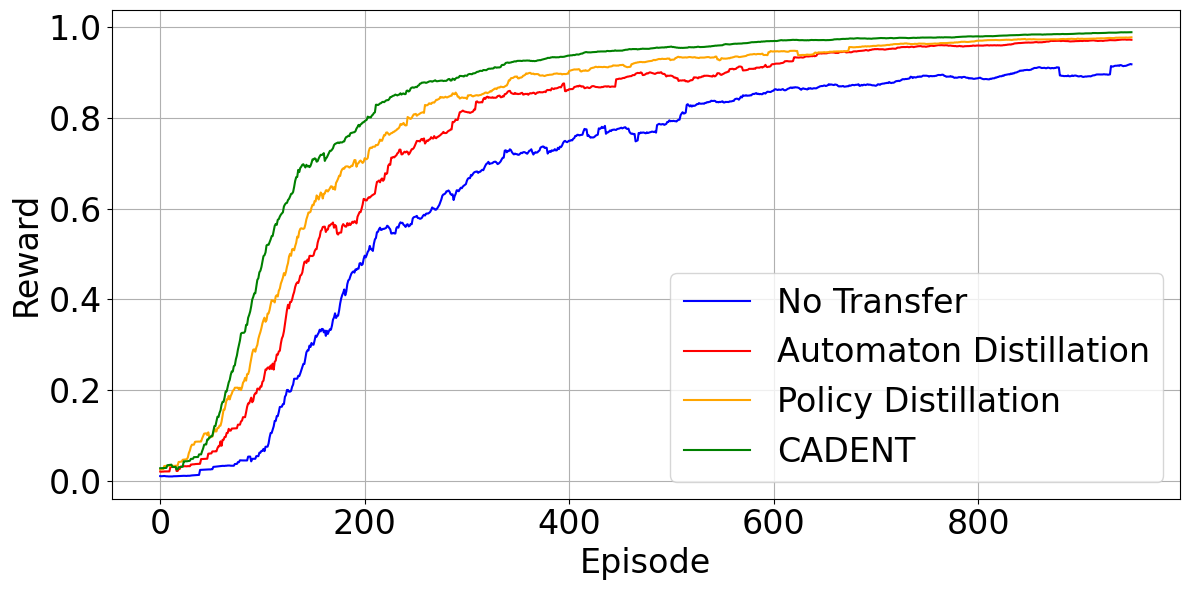}
\end{minipage}%
\hfill
\begin{minipage}{0.48\columnwidth}
  \centering
  \includegraphics[width=\linewidth]{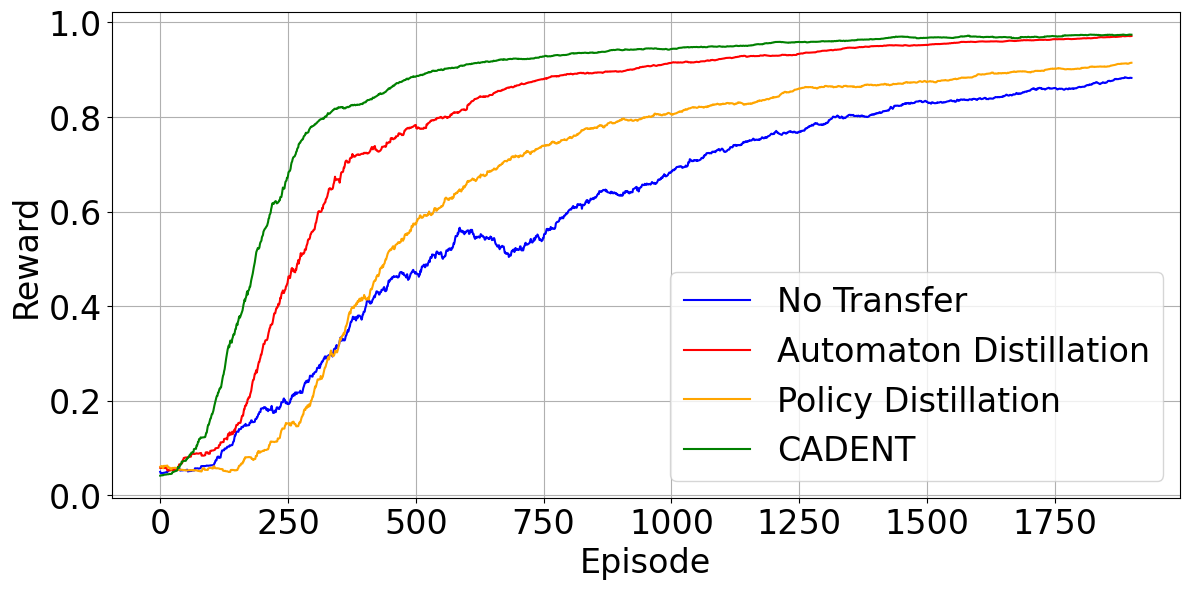}
\end{minipage}

\vspace{0.1cm}

\begin{minipage}{0.48\columnwidth}
  \centering
  \includegraphics[width=\linewidth]{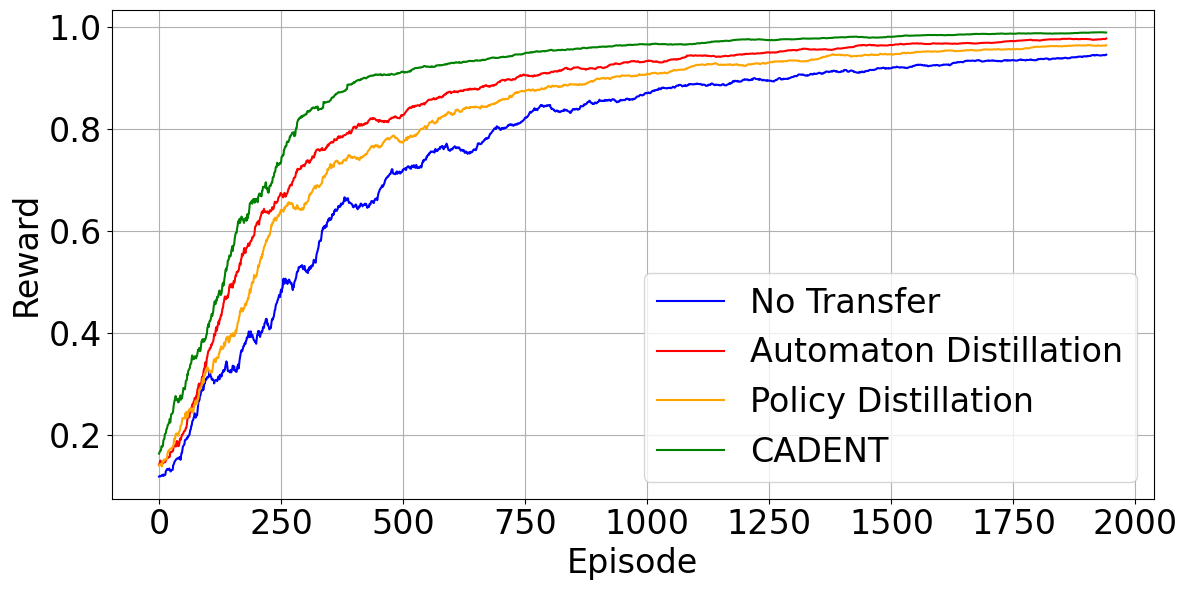}
\end{minipage}%
\hfill
\begin{minipage}{0.48\columnwidth}
  \centering
  \includegraphics[width=\linewidth]{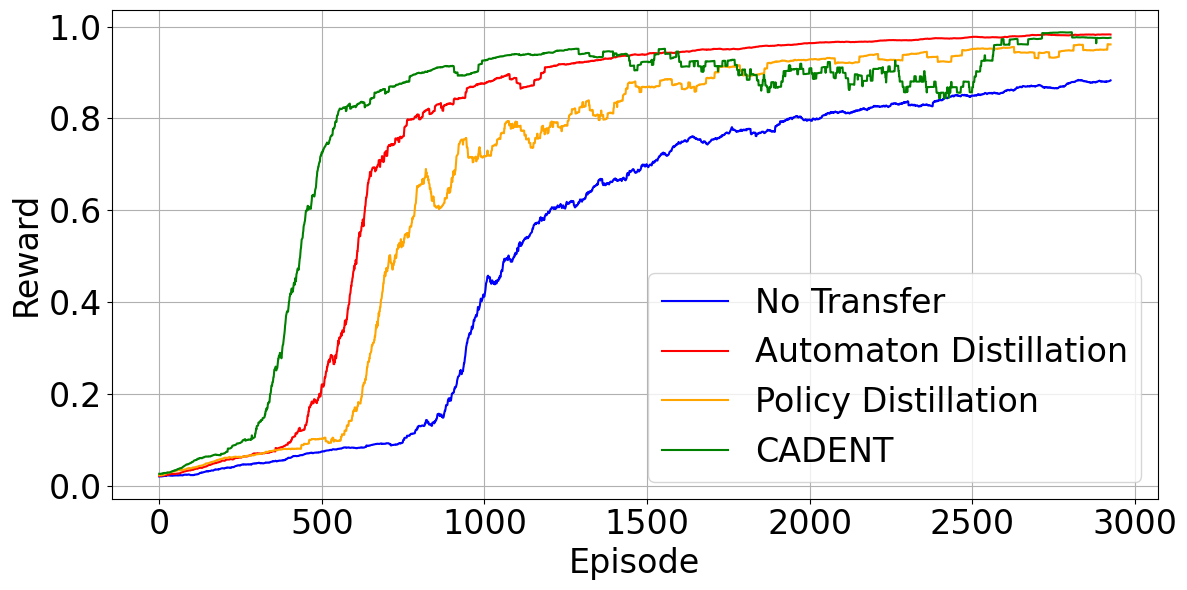}
\end{minipage}
\caption{Reward per episode across all four environments. Top: Blind Craftsman (left), Dungeon Quest (right). Bottom: Mountain Car Collection (left), Warehouse Robotics (right).}
\label{fig:reward_per_episode}
\end{figure}

\paragraph{Steps per Episode.}
Figure~\ref{fig:steps_per_episode} illustrates the efficiency of learned policies by measuring steps required to complete tasks. CADENT converges to policies requiring 20-40\% fewer steps than the No Transfer baseline across all environments. In Mountain Car Collection, CADENT's strategic guidance helps the agent quickly learn the momentum-building strategy, reaching the optimal path length by episode 400, while AD alone requires 700+ episodes. The convergence in Dungeon Quest is particularly striking—CADENT stabilizes at approximately 80 steps per episode, compared to 120+ steps for PD, demonstrating that the hybrid approach successfully combines long-term planning with precise action selection.

\begin{figure}[H]
\centering
\begin{minipage}{0.48\columnwidth}
  \centering
  \includegraphics[width=\linewidth]{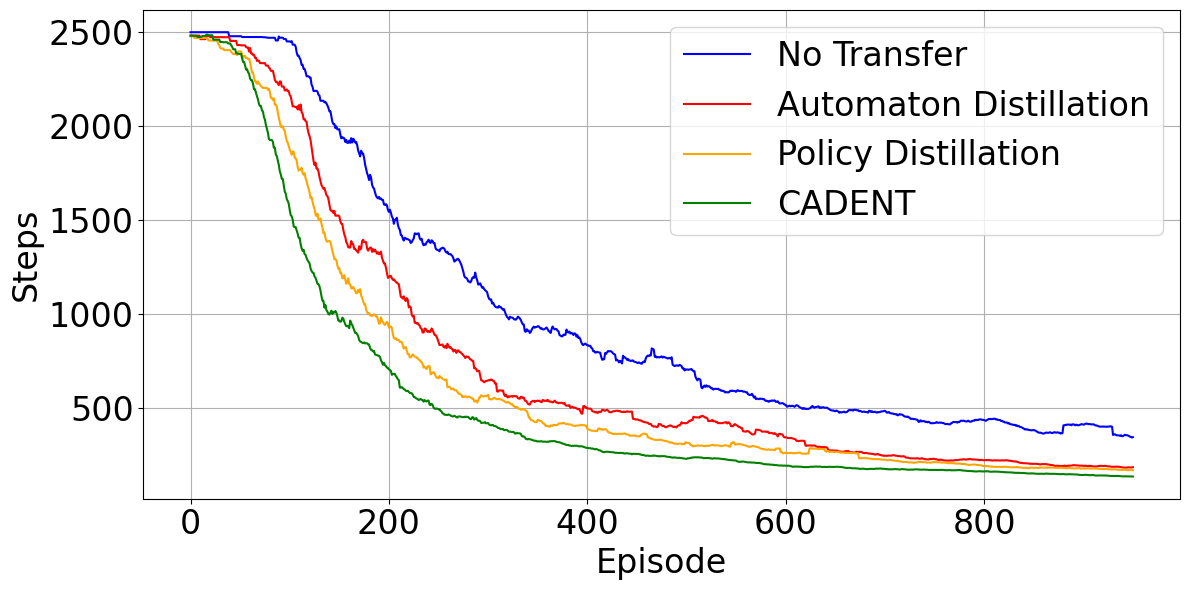}
\end{minipage}%
\hfill
\begin{minipage}{0.48\columnwidth}
  \centering
  \includegraphics[width=\linewidth]{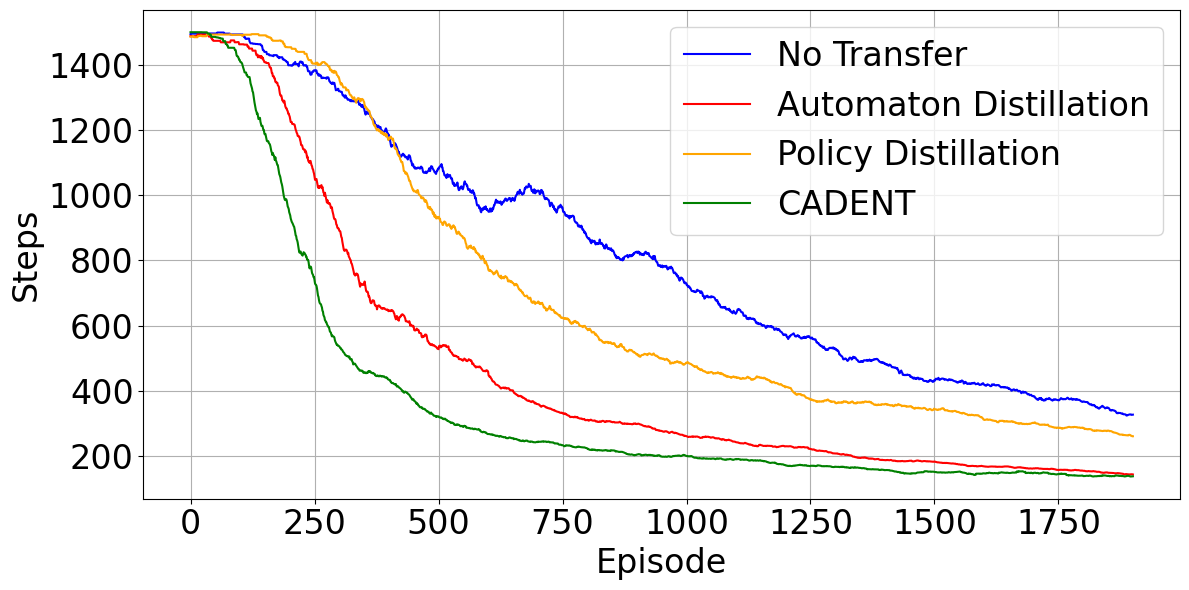}
\end{minipage}

\vspace{0.1cm}

\begin{minipage}{0.48\columnwidth}
  \centering
  \includegraphics[width=\linewidth]{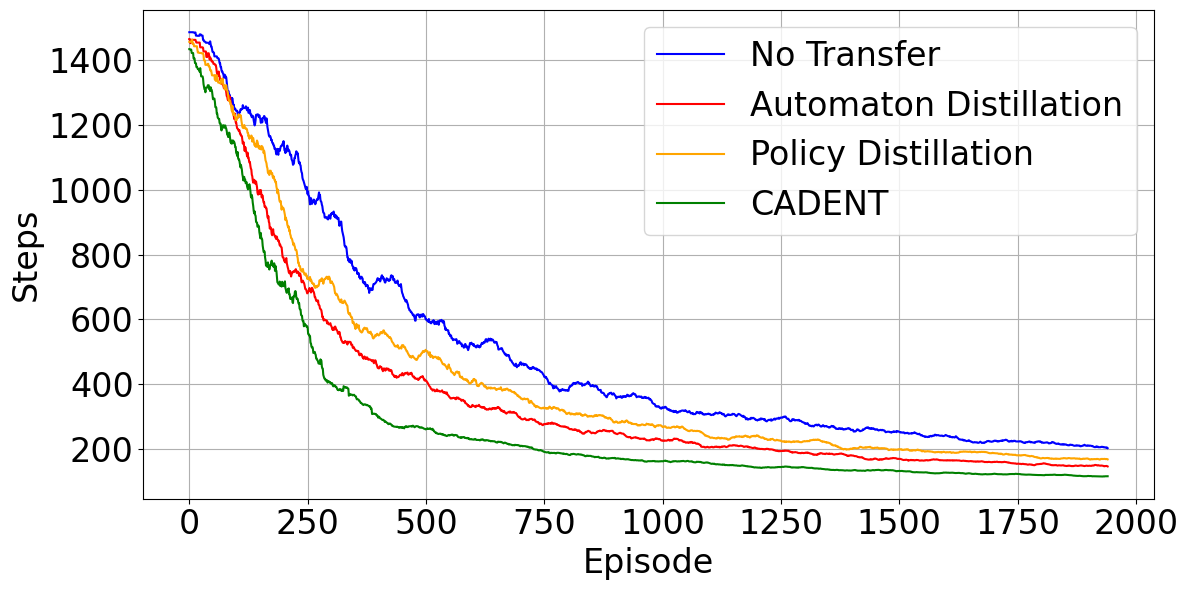}
\end{minipage}%
\hfill
\begin{minipage}{0.48\columnwidth}
  \centering
  \includegraphics[width=\linewidth]{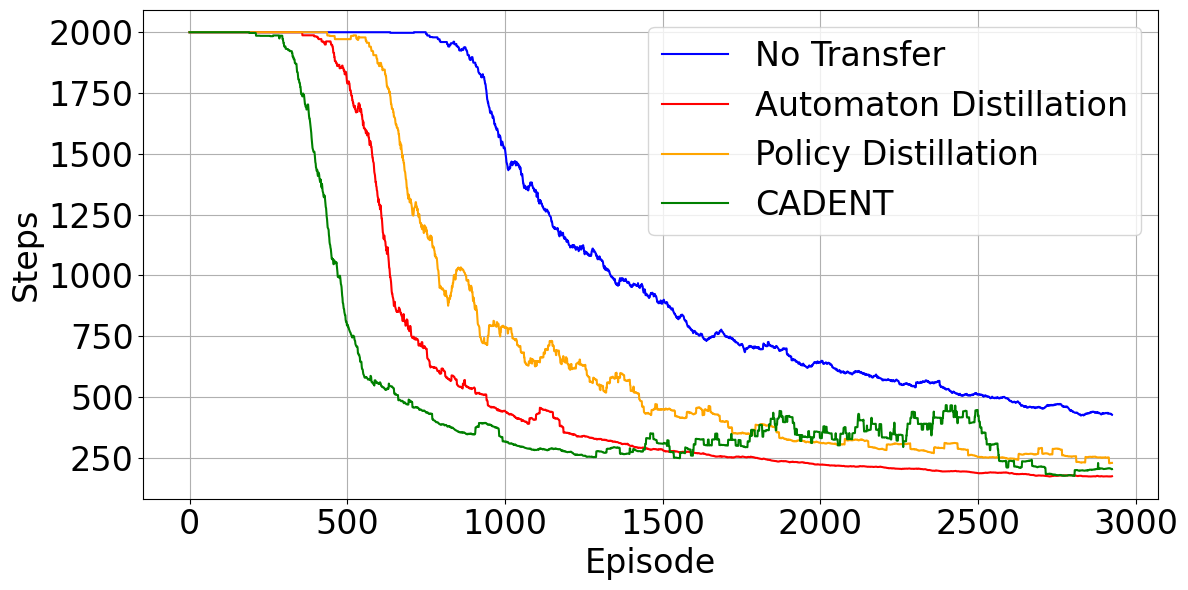}
\end{minipage}
\caption{Steps per episode across all four environments.}
\label{fig:steps_per_episode}
\end{figure}

\paragraph{Reward per Cumulative Steps (Sample Efficiency).}
Figure~\ref{fig:cumulative_steps} presents the most critical result: reward achievement as a function of total environment interactions, directly measuring sample efficiency. This metric reveals CADENT's primary advantage—achieving high performance with dramatically fewer samples. Across all four environments, CADENT reaches performance levels that baselines require 40-60\% more samples to achieve.

\begin{figure}[H]
\centering
\begin{minipage}{0.48\columnwidth}
  \centering
  \includegraphics[width=\linewidth]{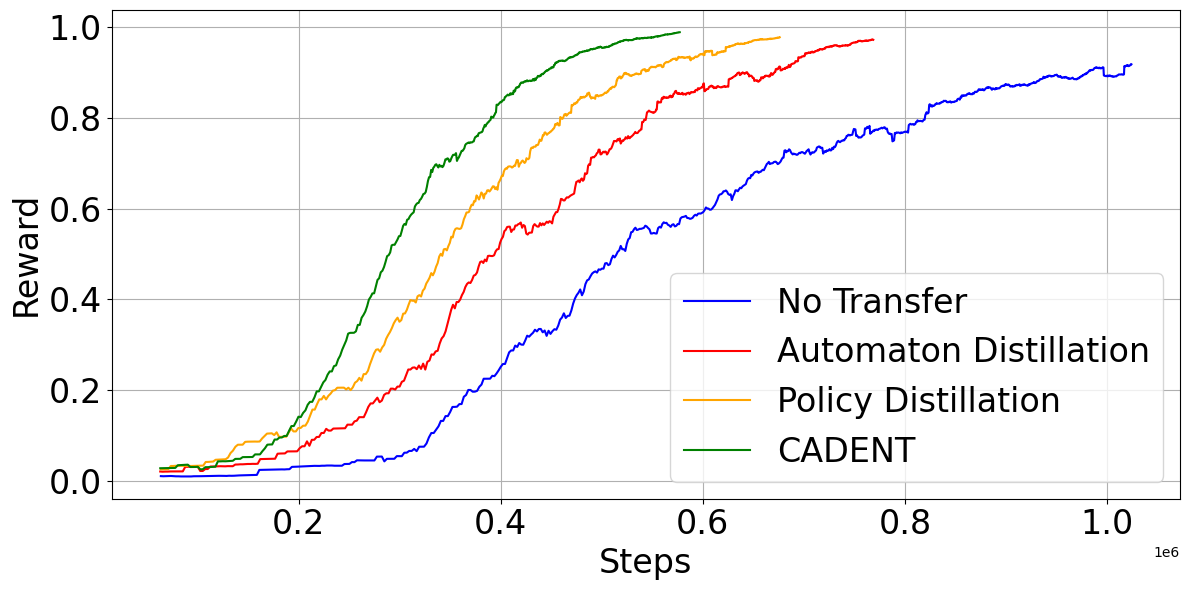}
\end{minipage}%
\hfill
\begin{minipage}{0.48\columnwidth}
  \centering
  \includegraphics[width=\linewidth]{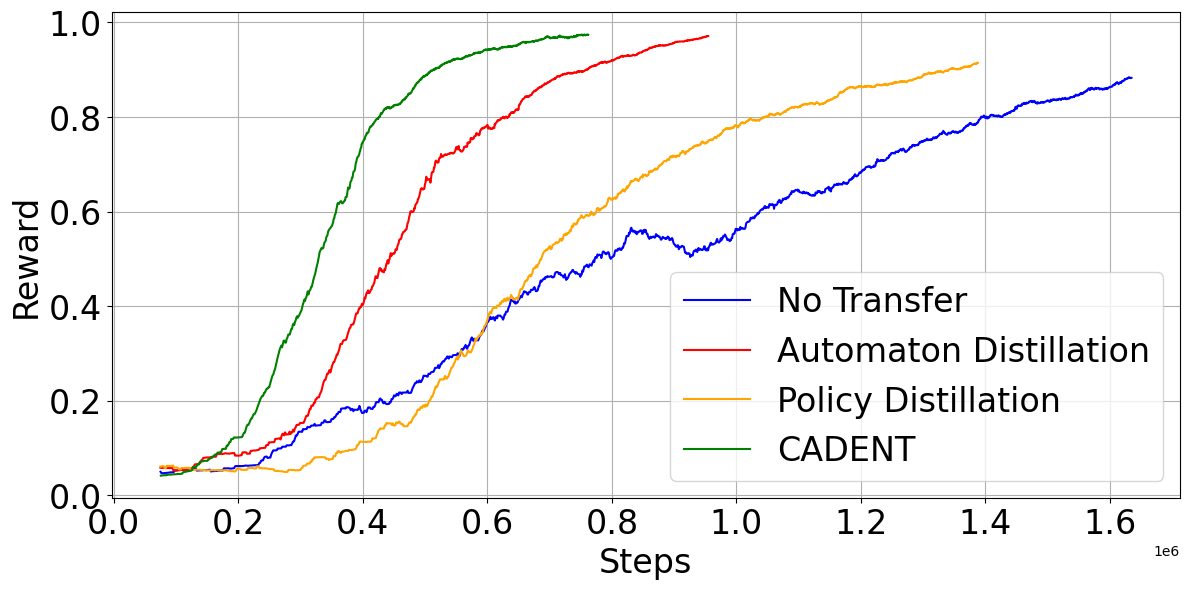}
\end{minipage}

\vspace{0.1cm}

\begin{minipage}{0.48\columnwidth}
  \centering
  \includegraphics[width=\linewidth]{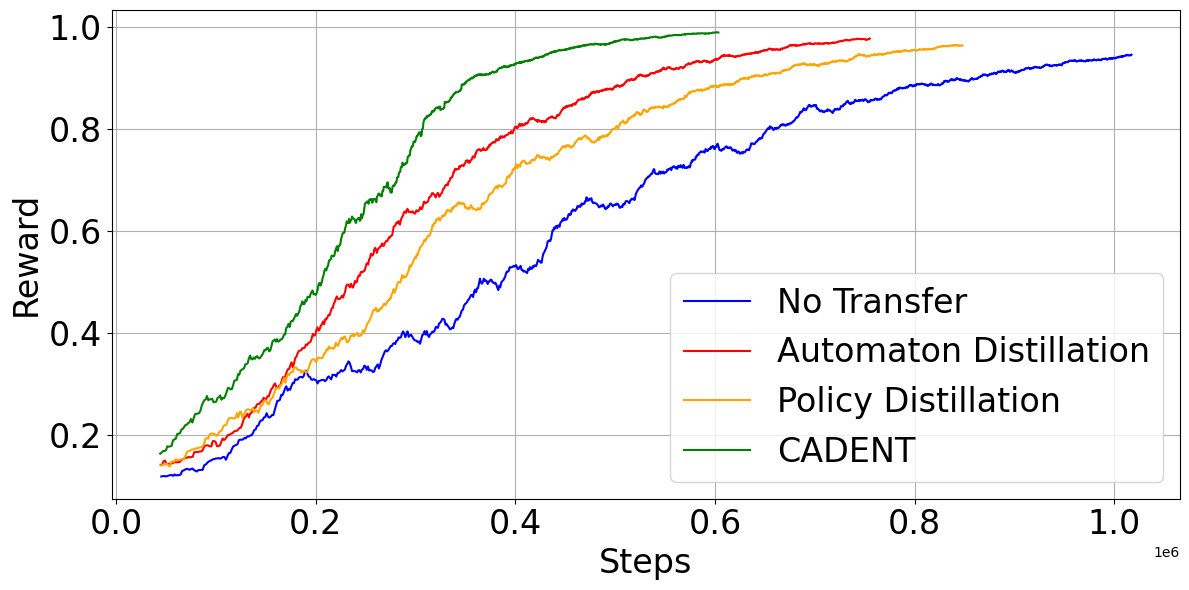}
\end{minipage}%
\hfill
\begin{minipage}{0.48\columnwidth}
  \centering
  \includegraphics[width=\linewidth]{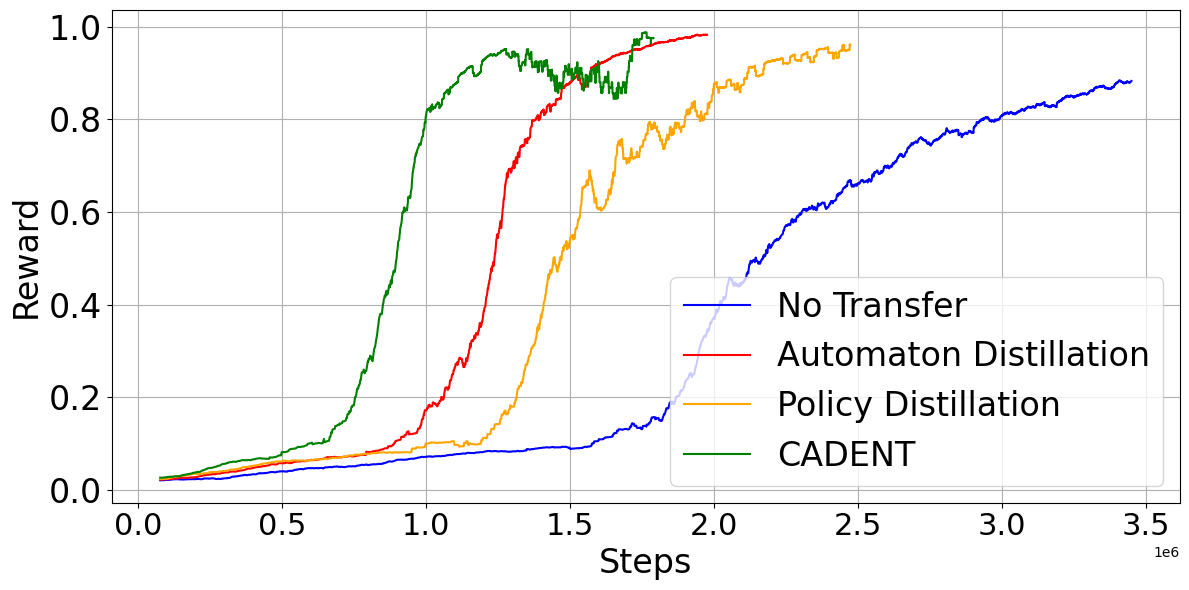}
\end{minipage}
\caption{Sample efficiency: reward per cumulative environment steps across all four environments.}
\label{fig:cumulative_steps}
\end{figure}

\paragraph{Key Findings.}
Our experimental results validate three critical aspects of CADENT: (1) \textbf{Sample Efficiency}: CADENT requires 40-60\% fewer environment interactions to reach target performance levels across all domains; (2) \textbf{Asymptotic Performance}: The experience-gated trust mechanism enables CADENT to adapt to target domain specifics and match or exceed teacher performance; (3) \textbf{Robustness}: Consistent improvements across environments with diverse challenges (exploration, sequential tasks, resource constraints, high-dimensional states) demonstrate CADENT's generality as a transfer learning framework.

\textbf{Ablation Study.}
To validate the contribution of each component of CADENT, an ablation study is conducted on the Blind Craftsman environment. The full model is compared to three ablated variants:

\textbf{No Trust Gate} uses a fixed, uniform trust value $\omega(s,a) = 0.5$ for all state-action pairs, testing the importance of the dynamic adaptation mechanism. 

\textbf{AD Only (No Tactical Guidance)} sets $\lambda_{pd}=0$, relying only on strategic automaton rewards and the agent's own experience. 

\textbf{PD Only (No Strategic Guidance)} sets $\lambda_{ad}=0$, relying only on tactical policy shaping and the agent's own experience.

The results, shown in Figure~\ref{fig:ablation}, confirm that all components are essential for peak performance. The No Trust Gate variant learns quickly initially but fails to adapt as effectively, leading to suboptimal final performance. Both AD Only and PD Only variants learn slower than the full CADENT model, demonstrating that neither strategic nor tactical guidance alone is sufficient. The synergistic combination of hybrid guidance and dynamic trust adaptation is critical for achieving optimal sample efficiency.

\begin{figure}[H]
\centering
\includegraphics[width=0.9\columnwidth]{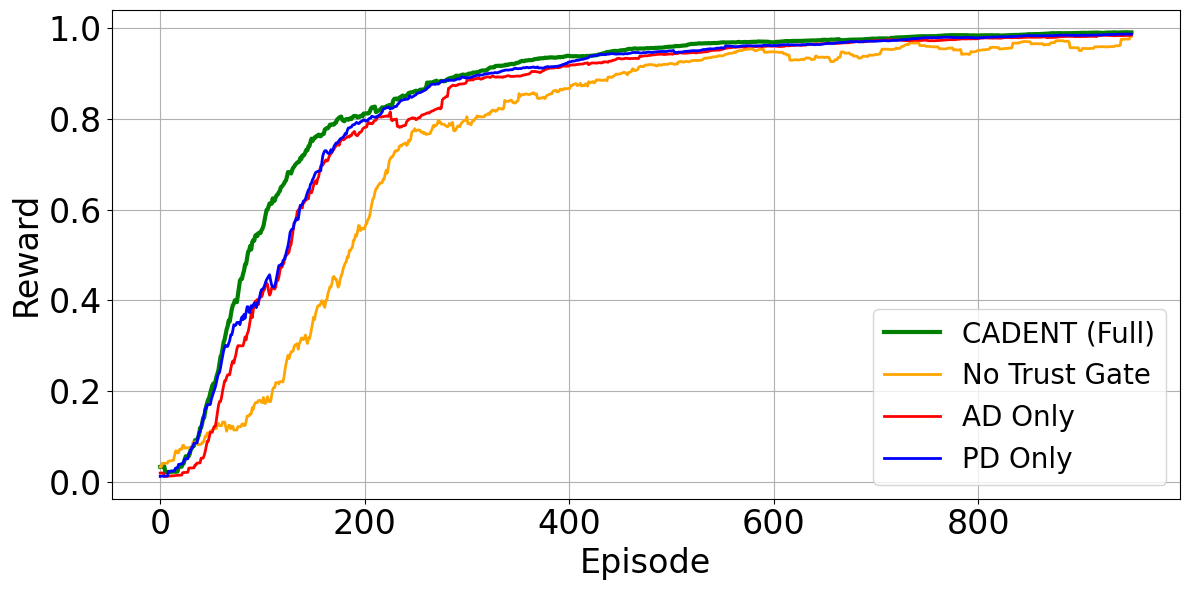}

\vspace{0.2cm}

\includegraphics[width=0.9\columnwidth]{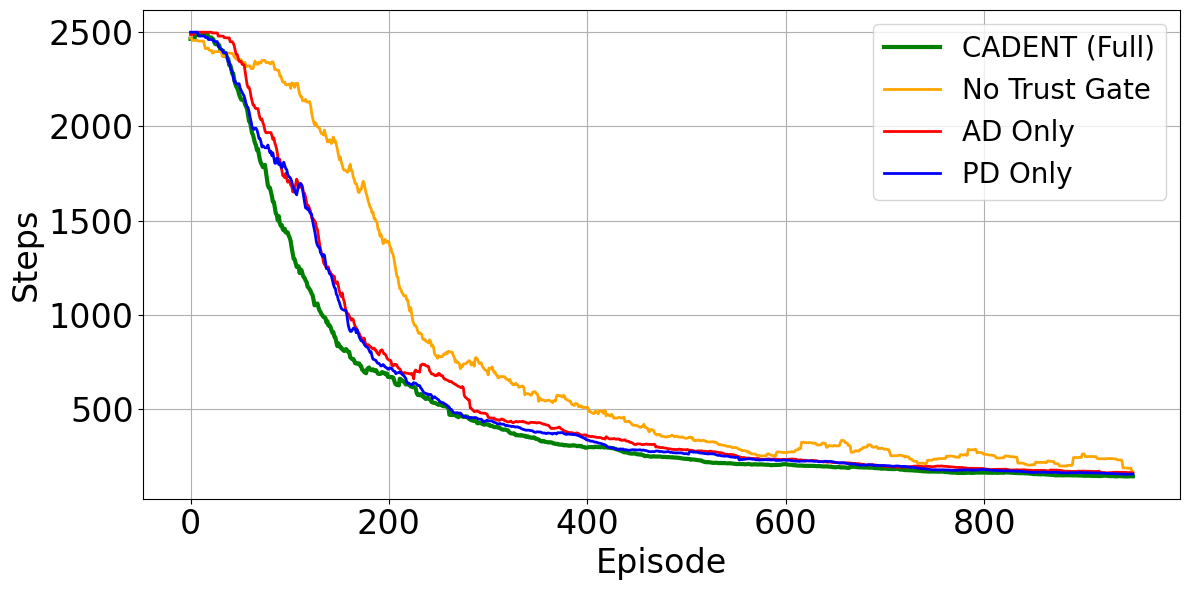}

\vspace{0.2cm}

\includegraphics[width=0.9\columnwidth]{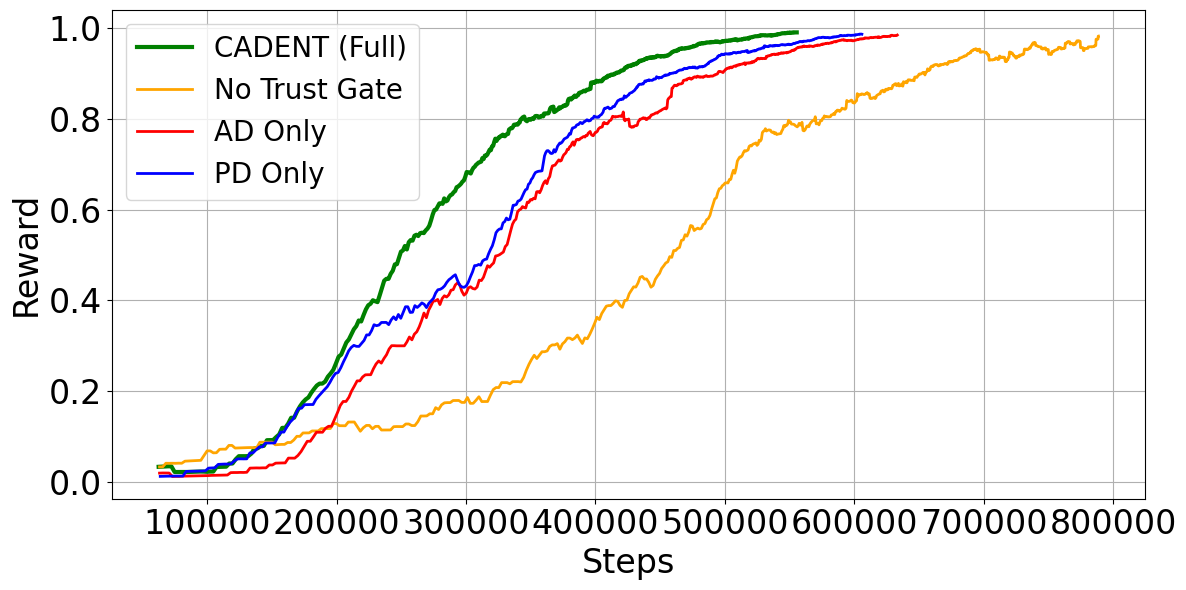}

\caption{Ablation study on the Blind Craftsman environment. Top: Reward per episode. Middle: Steps per episode. Bottom: Reward per cumulative steps.}
\label{fig:ablation}
\end{figure}

\section{THEORETICAL PROPERTIES}

While a complete theoretical analysis of CADENT's adaptive trust mechanism remains challenging due to its non-stationary nature, we can establish some basic properties that provide insight into its behavior.

\textbf{Update Boundedness.}

\begin{proposition}[Bounded Updates]
In the tabular setting with bounded rewards $|R(s,a)| \leq R_{max}$ and bounded teacher guidance $|Q_{AD}(q,q')| \leq Q_{max}^{AD}$, 
the CADENT update satisfies
$$|\Delta Q(s,a)| \leq \frac{R_{max}}{1-\gamma} + \lambda_{AD}Q_{max}^{AD} + 2\lambda_{PD}\:.$$
This ensures updates remain bounded regardless of the trust mechanism's behavior.
\end{proposition}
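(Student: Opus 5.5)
The plan is to use the triangle inequality together with the fact that the CADENT update is a convex combination with weight $\omega(s,a)\in[0,1]$. Since $\omega(s,a)=\sigma(-k(V_t(s,a)-\theta))$ is a sigmoid value, it lies in $(0,1)$ whatever the volatility tracker does. So for any transition,
\begin{equation*}
|\Delta Q(s,a)| \le \omega\,|\delta_{student}| + (1-\omega)\bigl(|r_{AD}| + |g_{PD}(s,a)|\bigr) \le |\delta_{student}| + |r_{AD}| + |g_{PD}(s,a)|,
\end{equation*}
where the second inequality just uses $\omega,1-\omega\le 1$. Replacing the convex combination by a sum is lossy but matches the additive form of the stated bound. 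It remains to bound the three terms separately and add them.

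The two teacher terms are immediate. By the definition of $r_{AD}$, either $r_{AD}=0$ or $r_{AD}=\lambda_{AD}Q_{AD}(q,q')$, so $|r_{AD}|\le \lambda_{AD}Q^{AD}_{max}$. For the tactical term, $\pi_{teacher}(a\mid q)$ and $\pi_{student}(a\mid s)$ (the softmax of $Q_{student}$ in Algorithm~\ref{alg:cadent}) are both probabilities in $[0,1]$. Their difference therefore lies in $[-1,1]$, giving $|g_{PD}|\le\lambda_{PD}\le 2\lambda_{PD}$; the factor $2$ in the statement is slack I am happy to keep.

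The student TD-error is the main obstacle. Here I would show by induction over updates that $Q_{student}$ stays in a bounded range, starting from the initialization $Q_{student}\equiv 0$. The update has the form $Q\leftarrow Q+\alpha\omega(r+\gamma\max Q'-Q)+\alpha(1-\omega)(\text{guidance})$. With $\alpha\le 1$, the student part is a contraction toward a Bellman target, and the guidance part adds at most $\alpha(1-\omega)(\lambda_{AD}Q^{AD}_{max}+\lambda_{PD})$ per step.

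To get exactly $R_{max}/(1-\gamma)$, I would invoke the standard tabular fact that Q-iterates for rewards bounded by $R_{max}$ stay in $[-R_{max}/(1-\gamma),R_{max}/(1-\gamma)]$. I would then argue that $|\delta_{student}|$ is controlled by this range. I expect this to be delicate on two counts. First, a naive bound gives $|\delta|\le R_{max}+(1+\gamma)B$ for $|Q|\le B$. Second, the guidance terms inflate $B$ to roughly $(R_{max}+\lambda_{AD}Q^{AD}_{max}+\lambda_{PD})/(1-\gamma)$.

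My first attempt will be to read the $R_{max}/(1-\gamma)$ term as the bound on the student-experience contribution under the standing assumption that the student's values lie in the range of realizable returns. I would note that the slack factor $2\lambda_{PD}$ and the discarded convex weights can absorb part of the discrepancy. If that does not close, I would state the proposition with the inductively derived $B$ in place of $R_{max}/(1-\gamma)$, which still yields a uniform, trust-independent bound.
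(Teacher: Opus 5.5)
Your decomposition is the same as the paper's sketch: use $\omega\in[0,1]$, then bound the TD term and the two guidance terms separately. Your bounds $|r_{AD}|\le\lambda_{AD}Q^{AD}_{max}$ and $|g_{PD}|\le\lambda_{PD}$ are correct. The gap is exactly the step you flag, and the paper's sketch has it too: it simply asserts $|\delta_{student}|\le R_{max}/(1-\gamma)$ ``in discounted MDPs''. Neither of your routes establishes this. Your first route is an added assumption rather than a proof. Even granting $|Q_{student}|\le B:=R_{max}/(1-\gamma)$, it only yields $|\delta|\le 2B$, since the stored value and the target $r+\gamma\max_{a'}Q(s',a')\in[-B,B]$ can sit at opposite ends of the range. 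This excess is real. Take $\lambda_{AD}=\lambda_{PD}=0$, $\theta$ so large that $\omega\approx 1$, and $\alpha=1$. Let the start action lead with probability $1/2$ each into a long chain paying $+R_{max}$ per step or one paying $-R_{max}$ per step. After a ``minus'' episode the stored start value is about $-\gamma B$. The next ``plus'' transition then gives $|\Delta Q|\approx 2\gamma B$, which exceeds the claimed bound $B$ once $\gamma>1/2$. So the stated bound already fails in this degenerate instance, and the extra $\lambda_{PD}$ hidden in $2\lambda_{PD}$ cannot absorb the excess.

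Your fallback induction does not close either. Write the update as $Q\leftarrow(1-\alpha\omega)Q+\alpha\omega\,(r+\gamma\max_{a'}Q(s',a'))+\alpha(1-\omega)G$ with $|G|\le C:=\lambda_{AD}Q^{AD}_{max}+\lambda_{PD}$. The one-step estimate preserves $|Q|\le B'$ only if $(1-\gamma)B'\ge R_{max}+C(1-\omega)/\omega$. Your candidate $B'=(R_{max}+C)/(1-\gamma)$ satisfies this only when $\omega\ge 1/2$. No finite $B'$ works for all $\omega$, because the guidance term is not contracted as $\omega\to 0$, and that is exactly the regime the gate enters when $|\delta|$ is large.

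This is not an artifact of the estimate. Let action $a$ at the initial state $s_0$ (automaton state $q$) lead to a terminal state with automaton state $q'\neq q$ and reward $0$, and suppose $c:=\lambda_{AD}Q_{AD}(q,q')-\lambda_{PD}>0$. Write $Q$ for $Q_{student}(s_0,a)$; since terminal values are never updated, $\delta=-Q$. Because $V_{t+1}\ge\eta|\delta_t|$ forces $\omega\le\sigma(-k(\eta Q-\theta))$, every visit gives $\Delta Q\ge c\,\sigma(k(\eta Q-\theta))-Q\,\sigma(-k(\eta Q-\theta))$. For instance, with $k\theta=1$ and $\theta\le c\eta/6$, this is bounded below by a positive constant for all $Q\ge 0$. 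Hence $Q_{student}(s_0,a)$ and $|\delta|$ grow linearly without bound, so there is no trust-independent bound on $Q_{student}$ or on $\delta$.

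What does hold unconditionally comes from the gate rather than from the rewards. The same inequality $V_{t+1}\ge\eta|\delta_t|$ gives $\omega|\delta|\le\sup_{x\ge 0}x\,\sigma(-k(\eta x-\theta))\le\theta/\eta+1/(\mathrm{e}\,k\eta)$. Hence $|\Delta Q|\le\theta/\eta+1/(\mathrm{e}\,k\eta)+\lambda_{AD}Q^{AD}_{max}+\lambda_{PD}$ for every $\alpha$ and every magnitude of $Q_{student}$. This is a valid bound, but it depends on the trust parameters instead of holding ``regardless'' of them.
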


\begin{proof}[Proof Sketch]
The CADENT update is a convex combination of the standard TD-error (bounded by $\frac{R_{max}}{1-\gamma}$ in discounted MDPs) and teacher guidance terms (bounded by assumption). Since $\omega(s,a) \in [0,1]$, the combination inherits the maximum of these bounds.
\end{proof}

\textbf{Trust Mechanism Properties.} The trust function $\omega(s,a) = \sigma(-k(V_t(s,a) - \theta))$ has intuitive properties. 

\textbf{Monotonicity} ensures that trust decreases as volatility $V_t(s,a)$ increases.

\textbf{Threshold behavior} occurs when $V_t(s,a) < \theta$, trust is high; when $V_t(s,a) > \theta$, trust is low. 

\textbf{Adaptive weighting} allows the mechanism to automatically balance teacher guidance vs. student experience based on learning stability.

\textbf{Empirical-Theoretical Gap.}
The full convergence analysis of CADENT remains an open problem due to the adaptive trust mechanism creating a non-stationary learning process, the interaction between strategic and tactical guidance being complex, and standard stochastic approximation theory not directly applying.

However, the empirical results across diverse environments suggest the algorithm behaves stably in practice, achieving both sample efficiency and asymptotic performance. The bounded update guarantee provides confidence that the algorithm won't diverge catastrophically.

\textbf{Practical Implications.}
From a practical standpoint, CADENT's design ensures graceful degradation: even when teacher guidance is suboptimal for the target domain, the trust mechanism prevents over-reliance on poor advice while the environmental reward signal $r$ continues to drive learning toward target-optimal behavior.

\section{CONCLUSION}

This paper introduced CADENT, a novel transfer learning framework that addresses a fundamental limitation in RL: the inability of existing methods to both unify strategic and tactical knowledge transfer and adapt to domain shift. CADENT makes two key contributions: (1) a hybrid distillation mechanism that fuses automaton-based strategic guidance with policy-based tactical advice into a coherent signal, and (2) an experience-gated trust mechanism that enables dynamic, state-action level arbitration between teacher knowledge and student experience.

Evaluation across diverse environments—from sparse-reward exploration tasks to high-dimensional control problems—demonstrates that CADENT achieves 40-60\% better sample efficiency than state-of-the-art baselines while maintaining superior asymptotic performance. Ablation studies confirm that both the hybrid guidance and adaptive trust components are essential for optimal performance.

This work establishes a new paradigm for adaptive knowledge transfer in RL, moving from static imitation to dynamic student-teacher partnership. Future directions include extending the framework to deep function approximation settings and lifelong learning scenarios with multiple teachers.

\section{Acknowledgments}
This work was supported by DARPA under Agreement No. HR0011-24-9-0427 and NSF under Award CCF-2106339.

\bibliography{references}


\end{document}